\documentclass[10pt,twoside]{IEEEtran}
\usepackage{mathtools, amsthm, amsopn, amsmath}
\usepackage{etex}
\usepackage{graphicx}
\usepackage{endnotes}
\usepackage{hyperref}
\usepackage{epsfig,psfrag}
\usepackage{pst-all}
\usepackage{amssymb,amsfonts,upref,cite,epsf,color,bm}
\usepackage{graphicx}
\usepackage{color}
\usepackage{calc}
\usepackage{booktabs}
\usepackage{tikz}
\usepackage{pgfplots}
\usepackage{csvsimple}
\newcommand\defeq{:=}
\usepackage{subfig}

\newcommand{\SBMparams}{\bm \theta}

\usepackage{algorithm, float}
\usepackage{algpseudocode}
\floatname{algoirthm}{algorithm}
\algnewcommand\algorithmicinput{\textbf{Input:}}
\algnewcommand\INPUT{\item[\Alg.\icinput]}
\algnewcommand\algoirhtmicoutput{\textbf{Output:}}
\algnewcommand\OUTPUT{\item[\algorithmicoutput]}

\newtheorem{proposition}{Proposition}

\newcommand{\nrlabeledcluster}{S}

\DeclareMathOperator*{\argmin}{argmin}

\newcommand\clustersize[1]{N_{#1}}
\newcommand\specgap[1]{\lambda_{#1}}
\newcommand\LapCluster[1]{\mathbf{L}^{(#1)}}
   
\newcommand{\vx}{\mathbf{x}}
 \newcommand{\clusterassgt}{c}
\newcommand{\mD}{\mathbf{D}}
\newcommand{\mL}{\mathbf{L}}

\newcommand{\numnodes}{N}
\newcommand{\numedges}{E}

\newcommand{\gindex}[1][i]{^{(#1)}}

\newcommand{\nrclusters}{K}
\newcommand{\nrnodes}{\numnodes}

\newcommand{\probintra}{p_{\rm in}}
\newcommand{\probinter}{p_{\rm out}}

\newcommand{\partition}{\mathcal{F}}
\newcommand{\cluster}[1]{\mathcal{C}^{(#1)}}
\newcommand{\samplingset}{\mathcal{M}}
\newcommand{\edges}{\mathcal{E}}
\newcommand{\nodes}{\mathcal{V}}
\newcommand{\graph}{\mathcal{G}}

\newtheorem{theorem}{Theorem}
\newtheorem{definition}[theorem]{Definition}

\newcommand{\prob}{{\rm P}}

\definecolor{lavander}{cmyk}{0,0.48,0,0}
\definecolor{violet}{cmyk}{0.79,0.88,0,0}
\definecolor{burntorange}{cmyk}{0,0.52,1,0}

\def\oran{orange!30}

\tikzstyle{vertex}=[draw,circle,burntorange, left color=\oran,
                       text=blue,fill=black!25,minimum size=15pt,inner sep=0pt]
\tikzstyle{sampledvertex}=[draw,circle,black, 
                       text=blue,fill=black!25,minimum size=15pt,inner sep=0pt]                       

\tikzstyle{legendvertex}=[rectangle, right,
                       text=black,minimum size=15pt,inner sep=0pt]

\tikzstyle{legendsample}=[rectangle, right,
                       text=black,minimum size=15pt,inner sep=0pt]  

\title{Clustering in Partially Labeled Stochastic Block Models via Total Variation Minimization}
\author{Alexander Jung\\ 
Department of Computer Science  \\
 Aalto University, Espoo, Finland \\ 
 firstname.lastname(at)aalto.fi
}


\begin{document}
	\maketitle
\begin{abstract}
A main task in data analysis is to organize data points into coherent groups 
or clusters. The stochastic block model is a probabilistic model for the cluster 
structure. This model prescribes different probabilities for the presence of edges within 
a cluster and between different clusters. We assume that the cluster assignments 
are known for at least one data point in each cluster. In such a partially labeled stochastic 
block model, clustering amounts to estimating the cluster assignments of the remaining 
data points. We study total variation minimization as a method for this clustering task. 
We implement the resulting clustering algorithm as a highly scalable message passing 
protocol. We also provide a condition on the model parameters such that total variation 
minimization allows for accurate clustering.  
\end{abstract}


\section{Introduction}

Many application domains generate data with an intrinsic network structure \cite{BigDataNetworksBook,NewmannBook}. 
One of the main workhorses for processing such networked data is the stochastic block model (SBM) \cite{AbbeSBM2018}. 
The SBM is a generative (probabilistic) model for the network structure of data and offers a principled approach 
to community detection or clustering methods \cite{Fortunato2010,Gao2017}. 

The SBM extends the Erd\H{o}s-R{\'e}nyi (ER) random graph model by prescribing an intrinsic cluster structure. 
The cluster assignments of nodes (data points) are considered as labels associated with nodes. 
Clustering algorithms are obtained from inference methods for the SBM which estimate the labels from the 
observed links between data points \cite{Mossel16,Amini2013,PhysRevESSL}. 

Most existing clustering methods for the SBM only take network structure into account. 
However, in some applications we might have a good idea of the (difference in the) 
cluster assignments for a few data points. The partially labeled SBM (PLSBM) assumes 
that cluster assignments of a certain fraction of the nodes are known. Our analysis 
also covers the extreme case of having access to the cluster assignment of exactly 
one data point for each cluster \cite{AvilesRivero2019}.

Clustering methods for the PLSBM, which are also known as local cluster methods, 
have received some attention recently \cite{Mossel16,CaiPLSBM}. Thresholds on the PLSBM 
parameters have been derived in \cite{CaiPLSBM}. These thresholds 
provide conditions which ensure that local clustering methods are successful with 
overwhelming probability for growing graph sizes. In contrast to \cite{Mossel16,CaiPLSBM}, 
our results are non-asymptotic and provide explicit bounds for a given (finite) PLSBM. 

The closest to our setting is \cite{PhysRevESSL}, which studies approximate implementations 
of the Bayes' (optimal) estimator for the cluster assignments. While the results \cite{PhysRevESSL} 
are asymptotic in nature, we use a non-asymptotic analysis using given (finite) model parameters. 
Moreover, the proposed clustering method is not motivated from Bayes' estimation theory but via 
regularized empirical risk minimization using total variation for regularization. We solve this minimization 
problem using a modern convex optimization method (see Section \ref{sec_tv_min}). This iterative method 
allows for rather complete characterization of the convergence behaviour (speed). Computationally, 
the proposed clustering method can be implemented as scalable message passing.

We represent the cluster assignments of data points via piece-wise constant graph signals. 
This opens the toolbox of graph signal processing for the design of clustering methods. 
For the recovery of piece-wise constant graph signals, total variation minimization has been 
proven useful. 

Our main contributions are: 
\begin{itemize}
\item a novel message passing method to solve PLSBM. 
\item a precise condition on the model parameters such that the proposed method 
recovers correct cluster assignments with high probability. 
\end{itemize} 	
	
\textbf{Notation.} 
The maximum and minimum of two numbers $x,y$ is denoted as $x \vee y$ 
and $x \wedge y$, respectively. 
	
\section{Problem Formulation}
\label{sec_problem_formuation} 

We represent networked data by an undirected \emph{empirical graph} $\graph = (\nodes, \edges)$. 
The nodes $i \in \nodes =  \{1, \ldots, \numnodes\}$ represent data points such as text documents or  
social network users.
\begin{figure}[htbp]
\begin{center}
\begin{tikzpicture}
    \tikzset{x=1cm,y=1.2cm,every path/.style={>=latex},node style/.style={circle,draw}} 
    \coordinate[] (x2) at (0,0); 
    \coordinate[] (x10) at (-1,1)  ;    
    \coordinate[] (x3) at (0,2);
    \coordinate[] (x4) at (1,1);  
    \coordinate[] (x12) at (2.5,1) ;   
    \draw[line width=0.4,-] (x2) edge  (x3);
    \draw[line width=0.4,-] (x2) edge (x10);
    \draw[line width=0.4,-] (x3) edge   (x10);
    \draw[line width=0.4,-] (x2) edge    (x4);
    \draw[line width=0.4,-] (x3) edge (x4);
     \coordinate[] (x5) at (5,0); 
    \coordinate[] (x6) at (5,2);    
    \coordinate[] (x7) at (4,1);
    \coordinate[] (x11) at (6,1);          
    \draw[line width=0.4,-] (x5) edge (x6);
        \draw[line width=0.4,-] (x11) edge (x6);
            \draw[line width=0.4,-] (x5) edge (x11);
    \draw[line width=0.4,-] (x5) edge  (x7);
    \draw[line width=0.4,-] (x6) edge  (x7);
    \draw[line width=0.4,-] (x4) edge node [above] {$\partial \partition$}  (x7) ;
    \draw [fill=white]   (x2)  circle (4pt) node [right=1mm] {$2$};
    \draw [fill=white]   (x3)  circle (4pt) node [right=1mm] {$3$};
    \draw [fill=gray!30]   (x10)  circle (4pt) node [above=1mm] {$1$} ;
    \draw [fill=white]   (x4)  circle (4pt) node [above=1mm] {$4$} ;
     \draw [fill=white]   (x5)  circle (4pt) node [right=1mm] {$6$};
    \draw [fill=white]   (x6)  circle (4pt) node [right=1mm] {$7$};
    \draw [fill=gray!30]   (x11)  circle (4pt) node [above=1mm] {$8$} ;
    \draw [fill=white]   (x7)  circle (4pt) node [above=2mm] {$5$} ;
    \node[draw,circle,dashed,minimum size=3cm,inner sep=0pt,label={$\cluster{1}$}] at (0.1,1) {};
    \node[draw,circle,dashed,minimum size=3cm,inner sep=0pt,label={$\cluster{2}$}] at (4.9,1) {};
\end{tikzpicture}
\end{center}
\caption{\label{fig_clustered_graph_signal} Empirical graph $\graph$ whose nodes $\nodes$ are grouped 
into two clusters $\cluster{1}$ and $\cluster{2}$ forming the partition $\partition=\{\cluster{1},\cluster{2}\}$. 
Cluster assignments are known only for shaded nodes.}
\end{figure}

Two data points $i,j \in \nodes$ are connected by an undirected edge $\{i,j\} \in \edges$ if they are considered similar, 
such as documents authored by the same person or social network profiles of befriended users.  
For ease of notation, we denote the edge set $\edges$ by $\{1, \ldots, \numedges\defeq|\edges|\}$. 

It will be convenient to define a directed version of the empirical graph by orienting each undirected 
edge $e=\{i,j\}$ to obtain the directed edge $(e^{+},e^{-})$ with $e^{+} \defeq i \wedge j$ and $e^{-}\defeq i \vee j$. 
For example, the undirected edge $\{ 7, 3\}$ becomes the directed edge $(3,7)$ with $e^{+} = 3$ and $e_{-}=7$. 

The incidence matrix $\mD \!\in\! \mathbb{R}^{\numedges\!\times\!\numnodes}$ of $\graph$ is 
\begin{equation}
D_{e,i} = \begin{cases}  1& \mbox{ if } i = e^{+}  \\ 
- 1& \mbox{ if } i = e^{-}  \\ 
0 &  \mbox{ else.}  \label{equ_def_incidence_mtx}
\end{cases}
\end{equation} 
The rows of $\mD$ correspond to the edges $e\!\in\!\edges$. Each column of $\mD$ 
represents a particular node $i\!\in\!\nodes$ of $\graph$. 
The row representing $e\!=\!\{i,j\}$ contains exactly two non-zero entries 
in the columns corresponding to the nodes $i,j \in \nodes$.

The neighbourhood and degree of a node $i\!\in\!\nodes$ are denoted $\mathcal{N}(i)\!\defeq\!\{j\!:\!(i,j)\!\in\!\edges\}$ and 
$d_{i}\!\defeq\!|\mathcal{N}(i)|$, respectively. We also 
define the directed neighbourhoods of a node $i\!\in\!\nodes$ as 
\begin{align} 
\mathcal{N}^{+}(i) & \defeq \{ j \in \nodes : \{i,j\} \!\in\!\edges, i < j \} \mbox{, and }  \nonumber \\ 
\mathcal{N}^{-}(i) & \defeq \{ j \in \nodes : \{i,j\} \!\in\!\edges, i > j \}. 
\end{align} 

Another matrix that is naturally associated with a graph $\graph$ is the Laplacian matrix 
$\mL \in \mathbb{R}^{\nrnodes \times \nrnodes}$ with entries 
\begin{equation} 
L_{i,j}\!=\!-1 \mbox{ for }  \{i,j\}\!\in\!\edges, L_{i,i}\!=\!d_{i} \mbox{ for } i\!\in\!\nodes, L_{i,j}\!=\!0 \mbox{ else.}
\end{equation} 

The Laplacian matrix $\mathbf{L}$ is positive semi-definite with non-negative ordered 
eigenvalues $\lambda_{1} \leq \lambda_{2} \leq \ldots \leq \lambda_{\nrnodes}$. 
A graph is connected if and only if $\lambda_{1}=0$  \cite{ChungSpecGraphTheory}. 
Our approach focuses on the second largest eigenvalue $\lambda_{2}$ which quantifies 
the connectivity of $\graph$. Loosely speaking, the larger the value of $\lambda_{2}$, the 
better the graph is connected. 
	
We consider networked data with an intrinsic cluster (community) 
structure \cite{NewmannBook,Fortunato2010,Porter2009}. Formally, 
the data points are partitioned into $\nrclusters$ disjoint clusters (or communities), 
\begin{equation}
\label{equ_partitioning}
\nodes = \cluster{1} \cup \ldots \cup \cluster{\nrclusters}.  
\end{equation}  
The size of the $k$th cluster is $\clustersize{k} \defeq \big| \cluster{k}\big|$, 
$\nrnodes = \sum_{k=1}^{\nrclusters} \clustersize{k}$. 

Overloading notation, $\cluster{k}$ denotes also the subgraph 
induced by cluster $\cluster{k}$ containing edges $\{i,j\}\!\in\!\edges$ with $i,j\!\in\!\cluster{k}$. 
The Laplacian matrix of $\cluster{k}$ is denoted $\LapCluster{k}$. 

Each data point $i\!\in\!\nodes$ belongs to exactly one cluster, whose index 
is denoted $\clusterassgt\gindex \in \{1,\ldots,\nrclusters\}$. The cluster assignments 
are known for a small subset $\samplingset$ of nodes only (e.g., via domain expertise). 
Our goal is to recover the cluster assignments of all data points 
based on the network structure and known cluster assignments in $\samplingset$. 

The recovery of cluster assignments $\clusterassgt\gindex$ is based on the hypothesis 
that data points within cluster are more densely connected than data points of 
different clusters. A popular model for networks with a cluster or community structure is the 
stochastic block model (SBM).

As a probabilistic model, the SBM interprets the (presence of) edges between 
two nodes $i,j \in \nodes$ as realizations of independent binary random variables $t_{i,j} \in \{0,1\}$. 
An edge connects two nodes $i,j\!\in\!\nodes$ if and only if $t_{i,j}\!=\!1$. 
The edge probability $\prob\{t_{i,j}\!=\!1\}$ between nodes $i,j\!\in\!\nodes$ depends 
only on the cluster assignments $c_{i}, c_{j}\!\in\!\{1,\ldots,\nrclusters\}$. 

The most basic SBM prescribes a constant probability $\probintra$ 
for edges within the same cluster, and another constant probability $\probinter$ for intra-cluster 
edges,   
\begin{equation}
\label{equ_def_prob_inter}
\prob \{ t_{i,j} \!=\! 1\}\!=\! \probintra \mbox{ for } \clusterassgt^{(i)} = \clusterassgt^{(j)} \mbox{, } 
\prob \{ t_{i,j} \!=\! 1\}\!=\! \probinter  \mbox{ else.}
\end{equation}
We denote the complete set of SBM parameters as 
\begin{equation}
\label{equ_def_SBM_parameters}
\SBMparams \defeq \big( \clustersize{1},\ldots,\clustersize{\nrclusters},\probintra,\probinter \big).  
\end{equation} 
	
Our main result is the characterization of SBM parameter regime such that 
the cluster assignments $\clusterassgt_i$ of all data points $i \in \nodes$ 
can be perfectly recovered. We characterize a value range for the SBM parameters 
$\SBMparams$ such that a convex optimization method recovers 
all cluster assignments $\clusterassgt_{i}$. 

The accuracy of clustering methods depends on the connectivity structure 
induced by the edges $\edges$ as well on the knowledge of the cluster assignments $\clusterassgt_i$ 
for few data points in a small (training or sampling) set $\samplingset \subseteq \nodes$. 
We assume that cluster assignments are known for $\nrlabeledcluster$ nodes within 
each cluster, 
\begin{equation}
\label{equ_sampling_set_one_entry_each_cluster}
\big| \samplingset \cap \cluster{k}  \big| = \nrlabeledcluster \mbox{ for each } k=1,\ldots,\nrclusters. 
\end{equation} 
The set $\samplingset$ contains $\nrlabeledcluster$ nodes $i_{1}^{(k)},\ldots,i_{\nrlabeledcluster}^{(k)}$ 
for each $\cluster{k}$. 

\section{TV Minimization} 
\label{sec_tv_min}

We frame the task of learning the clustering assignments in a SBM within 
graph signal processing. To this end, we encode the cluster assignments 
$\clusterassgt_{i}$ as $\nrclusters$ separate graph signals 
\begin{equation}
\label{equ_def_cluster_signal}
x_{i}^{(k)}\!=\!1 \mbox{ for } i \in \cluster{k},  x_{i}^{(k)} =0 \mbox{ otherwise.} 
\end{equation} 
The piece-wise constant graph signals $x_{i}^{(k)}$, for $k=1,\ldots,\nrclusters$ are the indicator 
signals (functions) for the clusters $\cluster{k}$. 
Trivially, finding the cluster assignments $\clusterassgt_{i}$ is equivalent to the problem 
of recovering the graph signals $x_{i}^{(k)}$ from its values on the 
set $\samplingset$ of data points with known cluster assignments. The 
recovery of the $x_{i}^{(k)}$ can be down separately. In what follows, 
we will consider an arbitrary but fixed cluster $\cluster{k}$ and denote 
the corresponding indicator signal by $x_{i} \defeq x_{i}^{(k)}$.


Our focus is on the SBM parameter $\SBMparams$ regime where the nodes 
within the same cluster $\cluster{k}$ are connected densely, whereas only 
few links exist between different clusters.In these regimes, the graph signal 
$x_{i}$ will typically have a small TV
\begin{align}
\label{equ_def_TV_norm}
\| \vx \|_{\rm TV} & \defeq \sum_{\{i,j\}\in \edges} | x_{j} -x_{i} |. 
\end{align}
The TV of a graph signal on a subset $\mathcal{S} \subseteq \edges$ of edges is 
\begin{align}
\label{equ_def_TV_norm_subs}
\|  \vx \|_{\mathcal{S}} & \defeq \sum_{\{i,j\}\in \mathcal{S}}  | x_{j} -x_{i} |. 
\end{align}

It seems natural to recover a graph signal with small TV and 
known signal values on the set $\samplingset$ via TV minimization 
\begin{align} 
\widehat{\vx} &\!\in\!\argmin_{\tilde{\vx} \in \mathbb{R}^{\nrnodes}} \underbrace{\sum_{\{i,j\} \in \edges} \hspace*{-3mm}   | \tilde{x}_{j}\!-\!\tilde{x}_{i}|}_{=  \| \tilde{\vx} \|_{\rm TV}}
 \mbox{s.t. }  \tilde{x}_{i}\!=\!x_{i}  \mbox{ for all } i\!\in\!\samplingset. \label{equ_min_constr}
\end{align}
Since the objective function and the constraints in \eqref{equ_min_constr} are convex, the optimization 
problem \eqref{equ_min_constr} is a convex optimization problem \cite{BoydConvexBook}. In fact, 
\eqref{equ_min_constr} can be reformulated as a linear program \cite[Sec. 1.2.2]{BoydConvexBook}. 

As the notation in \eqref{equ_min_constr} indicates, there might be more than one solution. 
Each solution $\hat{\vx}$ of \eqref{equ_min_constr} provides an estimate for the cluster 
indicator $x^{(k)}_{i}$ (see \eqref{equ_def_cluster_signal}) that is characterized by (i) it is 
consistent with the cluster indicator for all nodes $i\in \samplingset$; and 
(ii) it has minimum TV \eqref{equ_def_TV_norm} among all such graph signals. 

As shown in \cite{JungTVMin2019}, TV minimization \eqref{equ_min_constr} can be solved iteratively 
by a scalable message passing method which we have summarized in Algorithm \ref{sparse_label_propagation_mp}. 
The stopping criterion in Algorithm \ref{sparse_label_propagation_mp} can be a fixed number of iterations, 
chosen based on the convergence analysis in \cite{ComplexitySLP2018}. 

Algorithm \ref{sparse_label_propagation_mp} solves a separate TV 
minimization problem \eqref{equ_min_constr} for each cluster $k=1,\ldots,K$. 
For each $k=1,\ldots,K$, an estimate $\bar{x}^{(k)}_{i}$ for the cluster indicator 
signal $x^{(k)}_{i}$ (see \eqref{equ_def_cluster_signal}) is computed. The final 
estimate $\hat{c}_{i}$ for the cluster assignment of node $i\in\!\nodes$ is chosen 
such that  
\begin{equation} 
\bar{x}^{(\hat{c}_{i})}_{i} = \max_{k=1,\ldots,\nrclusters} \bar{x}^{(k)}_{i}. 
\end{equation}

\begin{algorithm}[h]
\caption{Clustering in PLSBM via TV Minimization}{}
\begin{algorithmic}[1]
\renewcommand{\algorithmicrequire}{\textbf{Input:}}
\renewcommand{\algorithmicensure}{\textbf{Output:}}
\Require empirical graph $\graph=(\nodes,\edges)$, node set $\samplingset$ 
with known cluster assignments $\{ c_i \}_{i \in \samplingset}$. 
\vspace*{1mm}
 \For{ each cluster $k=1,\ldots,K$ } 
\vspace*{3mm}
\State $r\!\defeq\!0$, $\bar{\vx}^{(k)}\!=\!\hat{\vx}^{(0)}\!=\!\hat{\vx}^{(-1)}\!=\!\hat{\vx}^{(0)}\!\defeq\!\mathbf{0}$, 
$\gamma_{i}\!\defeq\!1/d_{i}$.
\Repeat
\vspace*{0.5mm}
\State $\tilde{x}_i\!\defeq\!2 \hat{x}^{(r)}_{i}\!-\!\hat{x}^{(r-1)}_{i}$ for all $i\!\in\!\nodes$ 
\vspace*{0.5mm}
\State  \hspace*{-2mm}$\hat{y}_{e}^{(r\!+\!1)}\!\defeq\!\hat{y}^{(r)}_{e}\!+\!(1/2)  (\tilde{x}_{e^{+}}\!-\!\tilde{x}_{e^{-}})$ for all $e\!\in\!\edges$
\vspace*{0.5mm}
\State \hspace*{-2mm}$\hat{y}_{e}^{(r\!+\!1)}\!\defeq\!\hat{y}_{e}^{(r\!+\!1)} / \max\{1, |\hat{y}_{e}^{(r\!+\!1)}| \}$ for all $e\!\in\!\edges$ 
\vspace*{0.5mm}
\State for all nodes $i\!\in\!\nodes$: 
\begin{equation}  
\nonumber
\hspace*{-3mm}\hat{x}^{(r\!+\!1)}_{i}\!\defeq\!\hat{c}^{(r)}_{i}\!-\!\gamma_{i} \bigg[ \hspace*{0mm}\sum\limits_{j \in \mathcal{N}^{+}(i)} \hspace*{-1mm}\hat{y}^{(r\!+\!1)}_{(i,j)} \hspace*{-1mm}- \hspace*{-1mm}\sum\limits_{j \in \mathcal{N}^{-}(i)} \hspace*{-1mm}\hat{y}^{(r\!+\!1)}_{(j,i)} \hspace*{0mm}\bigg]   
\end{equation}
\State $\hat{x}^{(r\!+\!1)}_{i} \defeq x_{i}^{(k)}$ for all $i\!\in\!\samplingset$ 
\vspace*{0.5mm}
\State $r \defeq r\!+\!1$    
\vspace*{0.5mm}
\State \hspace{-3mm}$\bar{x}^{(k)}_{i} \defeq (1-1/r)\bar{x}^{(k)}_{i} + (1/r) \hat{x}^{(r)}_{i}$ for all $i\!\in\!\nodes$
\vspace*{2mm}
\Until{stopping criterion is satisfied}
\EndFor
\vspace*{1mm}
\Ensure $\hat{c}_{i}\!\defeq\! {\rm argmax}_{k\!=\!1,\ldots,\nrclusters} \bar{x}^{(k)}_{i}$ for all $i\!\in\!\nodes$
\end{algorithmic}
\label{sparse_label_propagation_mp}
\vspace*{-1mm}
\end{algorithm}

\section{When Does it Work?}
\label{sec_num}
We first state our main result and then devote the rest of the section to its derivation. 
\begin{theorem} 
Consider the empirical graph $\graph$ modelled as the realization of a SBM 
with parameters ${\bm \theta}$ \eqref{equ_def_SBM_parameters}. For each 
cluster $\cluster{k}$, we know (the cluster assignments of) 
$\nrlabeledcluster$ data points $i_{1}^{(k)},\ldots,i_{\nrlabeledcluster} \in \cluster{k}$ \eqref{equ_sampling_set_one_entry_each_cluster}). 
Then, if 
	\begin{equation} 
	\label{equ_cond_param_recovery}
	\nrlabeledcluster \probintra/\probinter  \geq  \beta  \clustersize{k}  (\numnodes-\clustersize{k})
	\end{equation}
the probability of Algorithm \ref{sparse_label_propagation_mp} failing to deliver the correct cluster assignments is 
upper bounded by 
\begin{equation} 
\sum_{k=1}^{\nrclusters} \exp\big(-\probinter \clustersize{k}(\numnodes\!-\!\clustersize{k})  \alpha\big)+
 (\clustersize{k}\!-\!1) 0.9^{\probintra \clustersize{k}/2}.
\end{equation} 
Here, $\alpha$ and $\beta$ are (small) numerical constants. 
\label{thm_main_result}	
\end{theorem}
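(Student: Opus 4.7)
The plan is to reduce the theorem to a per-cluster exact-recovery statement for \eqref{equ_min_constr}, construct a dual certificate for TV minimization as a network flow inside each cluster, and verify its existence using standard concentration bounds for the Erd\H{o}s-R\'enyi subgraphs induced by the clusters.

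First, I would exploit the fact that Algorithm~\ref{sparse_label_propagation_mp} runs \eqref{equ_min_constr} separately for each indicator $x^{(k)}_{i}$ defined in \eqref{equ_def_cluster_signal} and only takes the argmax at the end. If, for every $k=1,\ldots,\nrclusters$, the minimization \eqref{equ_min_constr} has $x^{(k)}$ as its unique solution, then $\bar{x}^{(\clusterassgt_{i})}_{i}=1$ while $\bar{x}^{(k)}_{i}=0$ for every other $k$, so the argmax decoding recovers the correct cluster assignments. A union bound over $k=1,\ldots,\nrclusters$ then yields the additive failure estimate stated in the theorem, with each summand bounding the probability that exact recovery fails for one cluster.

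Second, for a fixed $k$, I would invoke the flow-based sufficient condition for exact TV recovery from \cite{JungTVMin2019}: $x^{(k)}$ is the unique solution of \eqref{equ_min_constr} as soon as there exists an edge vector $\flow \in \mathbb{R}^{\edges}$ with $|\flow_{e}| \leq 1$, equal to $\mathrm{sign}(x^{(k)}_{e^{+}} - x^{(k)}_{e^{-}}) \in \{\pm 1\}$ on every boundary edge of $\cluster{k}$, and with zero net divergence at every unlabeled node $i \in \nodes \setminus \samplingset$. Building such a $\flow$ amounts to routing one unit of flow from each of the roughly $\probinter \clustersize{k}(\numnodes-\clustersize{k})$ boundary edges through the interior of $\cluster{k}$ to one of the $\nrlabeledcluster$ labeled nodes in $\samplingset \cap \cluster{k}$; outside $\cluster{k}$ and between clusters the flow is pinned by the sign constraint on $\partial \cluster{k}$.

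Third, the main obstacle is controlling the per-edge congestion of this internal routing. Conditioning on the cluster assignments, the subgraph induced by $\cluster{k}$ is distributed as $G(\clustersize{k},\probintra)$. Standard spectral results for Erd\H{o}s-R\'enyi graphs (cf.\ \cite{ChungSpecGraphTheory}) imply that with probability at least $1 - (\clustersize{k}-1)\,0.9^{\probintra \clustersize{k}/2}$ the Laplacian spectral gap $\lambda_{2}(\LapCluster{k})$ is of order $\probintra \clustersize{k}$, which translates via the usual spectral-to-flow correspondence (electrical flows or uniform multi-commodity routing) into a maximum edge congestion on the order of $|\partial \cluster{k}| / (\nrlabeledcluster \probintra \clustersize{k})$. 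Requiring this bound to be at most one yields $\nrlabeledcluster \probintra \clustersize{k} \gtrsim |\partial \cluster{k}|$, and combining with the Chernoff estimate $|\partial \cluster{k}| \leq 2 \probinter \clustersize{k}(\numnodes-\clustersize{k})$, valid outside an event of probability at most $\exp\bigl(-\alpha\, \probinter \clustersize{k}(\numnodes-\clustersize{k})\bigr)$, reduces the sufficient condition to \eqref{equ_cond_param_recovery} for a suitable absolute constant $\beta$. The delicate step is calibrating $\alpha$ and $\beta$ so that the spectral-gap event and the boundary-concentration event together guarantee not just the existence of a feasible flow but one with the prescribed sign pattern on $\partial \cluster{k}$ and strict unit capacity on every internal edge.
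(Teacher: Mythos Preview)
Your plan is correct and mirrors the paper's architecture: reduce to per-cluster exact recovery of the indicator signals, certify the TV minimizer via a unit-capacity flow that routes the boundary demand to the labeled nodes, bound $|\partial\cluster{k}|$ by a Chernoff inequality, and control the internal routing via the spectral gap of the $G(\clustersize{k},\probintra)$ subgraph together with a union bound over $k$.

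The one substantive technical difference is how you establish existence of the flow certificate. You propose building the flow explicitly (electrical or uniform multi-commodity routing) and arguing that the maximum edge load is of order $|\partial\cluster{k}|/(\nrlabeledcluster\,\probintra\,\clustersize{k})$. The paper instead invokes Hoffman's circulation theorem on an augmented graph $\widetilde{\graph}^{(k)}$ (the cluster plus a super-sink attached to $\partial\cluster{k}$), which converts flow feasibility into a pure cut condition: every $\mathcal{S}\subset\cluster{k}\setminus\{i^{(k)}\}$ must send at least $2\,|\mathcal{S}\cap\partial\cluster{k}|$ edges to $\cluster{k}\setminus\mathcal{S}$. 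All such cuts are then lower-bounded in one stroke by the variational inequality $E^{(\mathcal{S})}\geq (1-1/\clustersize{k})\,\lambda_{2}(\LapCluster{k})$, and the Tropp bound on $\lambda_{2}$ finishes the job. The Hoffman route is a bit cleaner because it delivers an exactly feasible unit-capacity flow directly from the cut bound, so no separate $\ell_{\infty}$-congestion analysis of an explicit routing is needed; your route is workable but the step you flag as ``delicate'' (passing from a spectral gap to strict per-edge capacity and the correct sign on every boundary edge) is precisely what the min-cut/max-flow argument absorbs for free.
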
 
We derive conditions on the SBM parameters ${\bm \theta}$ (see \eqref{equ_def_SBM_parameters})
such that solutions of TV minimization \eqref{equ_min_constr} coincide with 
the cluster indicator signals \eqref{equ_def_cluster_signal} with high probability. This implies, in turn, 
that Algorithm \ref{sparse_label_propagation_mp} delivers the correct cluster assignments for all nodes, 
$\hat{c}_{i} = c_{i}$ for all $i \in \nodes$. 
To derive these conditions, we use the concept of circulations \cite{KleinbergTardos2006,JungnckelBook}. 

A circulation $f$ assigns each directed edge $e\!=\!(i,j)\!\in\!\edges$ some 
value $f_{e}\!\in\!\mathbb{R}$. 
We interpret the value $f_{e}$ as the amount of flow that passes through 
the edge $e$ from $e^{(+)}$ to $e^{(-)}$. 
The net flow into any node must be zero, 
\begin{equation} 
\label{equ_conservation_law}
\hspace*{-3mm}\sum_{j \in \mathcal{N}^{+}(i)} \hspace*{-2mm}f_{(i,j)}\!-\!\sum_{j \in \mathcal{N}^{-}(i)} \hspace*{-2mm} f_{(j,i)} = 0 \mbox{ for each } i\!\in\!\nodes.
\end{equation} 

Circulations provide a device to quantify the connectivity of the clusters $\cluster{k}$. 
We will develop a method that, for a particular cluster $\cluster{k}$, allows to probe 
how well the labelled node $i^{(k)} \in \samplingset$ is connected with the cluster 
boundary 
\begin{equation}
\partial \cluster{k} \defeq \{ i \in \cluster{k}: j \notin \cluster{k} \mbox{ with } \{i,j\} \in \edges \}. 
\end{equation}

It will be convenient to associate each 
cluster $\cluster{k}$ with a modified subgraph of the empirical graph $\graph^{(k)}$. 
\begin{definition}
Consider some $\cluster{k}$ containing one labeled node $i^{(k)}$ with 
known cluster assignment $\clusterassgt_{i}=k$. We construct the augmented subgraph 
$\widetilde{\graph}^{(k)}$ associated with $\cluster{k}$ by 
by retaining all intra-cluster edges $\{i,j\} \in \edges$ with $i,j \in \cluster{k}$. 
Moreover, the augmented subgraph $\widetilde{\graph}^{(k)}$ contains an additional 
node $t^{(k)}$ along with an edge $(t^{(k)},i)$ to each boundary node $i \in \partial \cluster{k}$.
\end{definition} 
Figure \ref{fig:augmented_subgraph} illustrates the concept of augmented subgraphs.
\vspace*{-2mm}
\begin{figure}[h]
	\vspace*{-2mm}
	\includegraphics[width=\columnwidth]{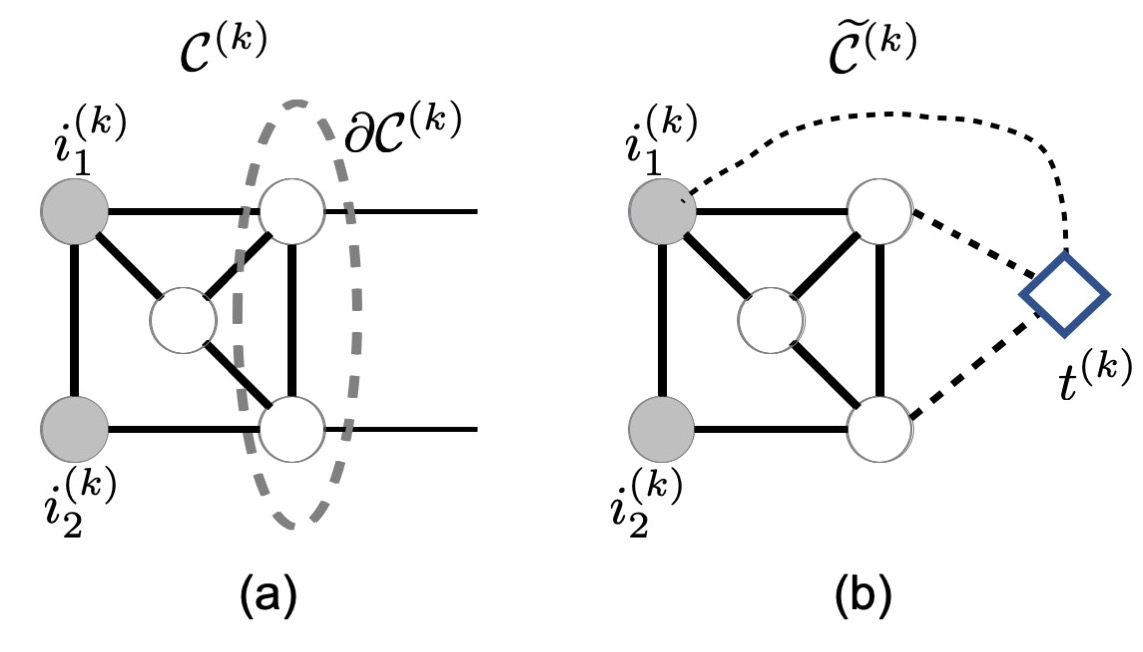}
	\vspace*{-8mm}
	\caption{(a) Cluster $\cluster{k}$ with boundary nodes $\partial \cluster{k}$. 
		Nodes with known cluster assignment are shaded. ( b) Augmented sub-graph $\widetilde{\graph}$ 
		obtained from cluster $\cluster{k}$ by adding node $t^{(k)}$ and edges $(i,t^{(k)})$ for each 
		boundary node $i\in \partial \cluster{k}$.}
	\label{fig:augmented_subgraph}
	\vspace*{0mm}
\end{figure} 
Definition \ref{fig:augmented_subgraph} is conceptually similar to the 
definition \cite[Definition 6]{JungTVMin2019}. In contrast to Definition \ref{fig:augmented_subgraph}, 
which is tailored to unweighted graphs,  \cite[Definition 6]{JungTVMin2019} applies to 
weighted graphs (with edges being assigned weights). 

Roughly speaking, the nodes $i_{1}^{(k)},\ldots,i_{\nrlabeledcluster}^{(k)}$ are well connected to the boundary 
$\partial \cluster{k}$ if there exist circulations with large flows through the boundary edges 
incident to $\partial \cluster{k}$. To make this approach sensible, we need to restrict the 
flows through intra cluster edges of $\cluster{k}$ via capacity constraints 
\begin{equation} 
\label{equ_capacity_constraints}
\hspace*{-3mm}|f_{e}|\!\leq\!1  \mbox{ for any edge } e\!=\!\{i,j\}\!\in\!\edges \mbox{ such that } i,j\!\in\!\cluster{k}. 
\end{equation} 
We formalize this approach in order to quantify the connectivity 
between labeled nodes $\samplingset$ and cluster boundaries. 
\begin{definition}
\label{def_well_connected}
Consider some cluster $\cluster{k}$ which contains one node $i^{(k)}$ for which we know 
the cluster assignment $\clusterassgt$. The node $i^{(k)}$ is well connected to the cluster 
boundary $\partial \cluster{k}$, if for each choice of boundary weights 
$\big\{ w_{e} \in \{-2,2\} \big\}_{e \in \partial \cluster{k}}$, there exists is a 
circulation $f_{e}$ on $\widetilde{\graph}^{(k)}$ such that 
\begin{equation} 
f_{e} = w_{e} \mbox{ for all edges } e=\big( t^{(k)},i \big) \mbox{ with } i \in \partial \cluster{k}.
\end{equation}
The circulation $f_{e}$ has to satisfy the capacity constraints \eqref{equ_capacity_constraints} 
on intra-cluster edges. 
\end{definition}
Carefully note that Definition \eqref{def_well_connected} does not constrain the 
flow $|f_{e}|$ through the particular edge $e = \big( i^{(k)},t^{(k)} \big)$. 

If the nodes $i^{(k)}\!\in\!\samplingset$, with known cluster assignments, 
are well connected (in the sense of Definition \ref{def_well_connected}) 
to the cluster boundaries, TV minimization \eqref{equ_min_constr} succeeds 
in recovering the cluster indicator signals \eqref{equ_def_cluster_signal} for every node. 
\begin{proposition}[Thm.\ 3 in \cite{NNSPFrontiers2018}]
\label{prop_well_connected_clusters}
Consider an empirical graph $\graph$ partitioned into disjoint clusters 
$\cluster{1},\ldots,\cluster{\nrclusters}$ (see \eqref{equ_partitioning}). The cluster 
assignments are known for one node $i^{(k)}$ in each cluster $\cluster{k}$. If, for each 
cluster, the labelled node is well connected to the cluster boundary, TV minimization \eqref{equ_min_constr} 
delivers the correct cluster assignments for all nodes $i \in \nodes$. 
\end{proposition}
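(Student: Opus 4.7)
The plan is to construct an explicit dual certificate showing that each cluster-indicator signal $\vx^{(k)}$ is the unique optimum of the corresponding TV-minimization problem \eqref{equ_min_constr}. Since Algorithm~\ref{sparse_label_propagation_mp} decouples recovery into one TV problem per cluster, it suffices to fix a cluster $\cluster{k}$ throughout and to argue the point for the associated indicator signal.

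First I would write out the subgradient / KKT optimality conditions for the linear program \eqref{equ_min_constr}: the signal $\vx^{(k)}$ is optimal if and only if there exists $\vy\in\mathbb{R}^{\numedges}$ with $y_{e}\in\partial|x^{(k)}_{e^{+}}-x^{(k)}_{e^{-}}|$ edgewise and with divergence $(\mD^{T}\vy)_{i}=0$ at every non-sampled node $i\notin\samplingset$. Because $\vx^{(k)}$ is piecewise constant on the partition \eqref{equ_partitioning}, the subgradient forces $y_{e}=\mathrm{sgn}(x^{(k)}_{e^{+}}-x^{(k)}_{e^{-}})\in\{-1,+1\}$ on every edge of the partition boundary $\partial\partition$ incident to $\cluster{k}$, while leaving $|y_{e}|\le 1$ free on intra-cluster edges. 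The divergence condition then reduces to a circulation requirement on $\cluster{k}$ away from the labelled node $i^{(k)}$.

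Next I would reduce the construction of $\vy$ to an application of Definition~\ref{def_well_connected}. The boundary signs pinned by the subgradient induce a choice of boundary weights $w_{e}\in\{-2,+2\}$ on the augmented edges $(t^{(k)},i)$; well-connectedness supplies a circulation $f$ on $\widetilde{\graph}^{(k)}$ that realizes these weights while obeying $|f_{e}|\le 1$ on intra-cluster edges. Setting $y_{e}\defeq f_{e}/2$ on intra-cluster edges and keeping the $\pm 1$ values dictated by the subgradient on $\partial\partition$ yields a candidate certificate. The factor $2$ in Definition~\ref{def_well_connected} is designed precisely so that this rescaling simultaneously (i) converts the augmented-edge flow into the $\pm 1$ boundary values required by the subgradient and (ii) produces the strict inequality $|y_{e}|\le 1/2<1$ on intra-cluster edges.

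Finally I would upgrade optimality to uniqueness via strict complementary slackness: the strict inequality $|y_{e}|<1$ on every intra-cluster edge forces any alternative optimizer $\tilde{\vx}$ to satisfy $\tilde{x}_{e^{+}}=\tilde{x}_{e^{-}}$ on each intra-cluster edge, so $\tilde{\vx}$ is constant on each $\cluster{k}$, and the sampling constraint $\tilde{x}_{i^{(k)}}=x^{(k)}_{i^{(k)}}$ pins that constant to give $\tilde{\vx}=\vx^{(k)}$. I expect the main obstacle to be the bookkeeping linking the flow-conservation law at a boundary node of $\widetilde{\graph}^{(k)}$, which absorbs a single $\pm 2$ contribution along the auxiliary edge to $t^{(k)}$, to the divergence condition at that same node on $\graph$, where the contribution instead splits across all genuine boundary edges incident to it. Making this translation work for every sign pattern on $\partial\partition$ is where the universal quantifier ``for each choice of boundary weights'' in Definition~\ref{def_well_connected} carries the argument.
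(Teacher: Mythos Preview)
The paper does not supply its own proof for this proposition; it is quoted as Theorem~3 of the cited reference \cite{NNSPFrontiers2018}, so there is no in-paper argument to compare your plan against. Your dual-certificate strategy---pin $y_e=\pm 1$ on $\partial\partition$ by the subgradient, require $(\mD^{T}\vy)_i=0$ off $\samplingset$, and manufacture the intra-cluster entries from the circulation of Definition~\ref{def_well_connected}---is exactly the standard route for such results and is almost certainly what the cited reference does.

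You have correctly located the one real obstruction but have not resolved it. A boundary node $i\in\partial\cluster{k}$ can be incident to \emph{several} boundary edges of $\graph$, so the forced contribution $\sum_{e\in\partial\partition,\,e\ni i}(\pm 1)$ at $i$ may have magnitude exceeding $1$; yet Definition~\ref{def_well_connected} attaches a \emph{single} augmented edge $(t^{(k)},i)$ carrying $w_e\in\{-2,2\}$. After your rescaling $y_e\defeq f_e/2$, the intra-cluster flow at $i$ balances exactly one unit, not the full boundary sum. The universal quantifier over sign patterns, together with convexity of the feasible-circulation polytope, does let you realise any $w\in[-2,2]^{|\partial\cluster{k}|}$, but that still caps the absorbable boundary contribution at each node at $2$ (or at $1$ after halving)---which fails once a node has three or more boundary edges of the same sign. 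To close this you will need either (i) to check that the cited reference's construction effectively places one augmented edge per boundary \emph{edge} rather than per boundary \emph{node}, so that the $\pm 2$ demands superpose, or (ii) to superpose one circulation per boundary edge and argue the capacity constraint survives, which will cost you the strict inequality $|y_e|<1$ and hence your uniqueness step as written. Either way, the phrase ``the universal quantifier\ldots carries the argument'' is where the plan is currently a promissory note rather than a proof.
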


To use Proposition \ref{prop_well_connected_clusters}, we need to ensure that 
the nodes $i^{(k)} \in \samplingset$ are well connected to the cluster boundaries $\partial \cluster{k}$. 
To this end, we first relate the requirements of Definition \ref{def_well_connected} to 
the size of cuts in the augmented subgraph $\widetilde{\graph}^{(k)}$. 
\begin{proposition} 
\label{prop_suff_cond_cutsize}
Consider some cluster $\mathcal{C}^{(k)}$ of the empirical graph $\graph$ 
containing the labeled node $i^{(k)}$. Assume that for each subset $\mathcal{S} \subseteq \cluster{k}$, 
containing $s \defeq \big| \mathcal{S} \cap \partial \cluster{k}\big|$ boundary nodes, there are at least 
$2 s$ edges between $\mathcal{S}$ and $\mathcal{C}^{(k)} \setminus \mathcal{S}$. Then $i^{(k)}$ is well 
connected to the boundary $\partial \cluster{k}$. 
\end{proposition}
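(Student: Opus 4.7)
\smallskip

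The plan is to recast Definition \ref{def_well_connected} as a network flow feasibility question on the cluster $\cluster{k}$ and then apply a max-flow min-cut (equivalently, Hoffman / Gale) argument. Given a choice of boundary weights $\{w_e \in \{-2,2\}\}_{e \in \partial \cluster{k}}$, flow conservation at the auxiliary node $t^{(k)}$ together with the unconstrained flow on the edge $(i^{(k)}, t^{(k)})$ lets me eliminate $t^{(k)}$: constructing the required circulation on $\widetilde{\graph}^{(k)}$ is equivalent to finding a flow on $\cluster{k}$ with (i) intra-cluster capacity constraints $|f_e| \leq 1$, (ii) supply $b_i \defeq w_{(t^{(k)},i)} \in \{-2,+2\}$ at each boundary node $i \in \partial\cluster{k} \setminus \{i^{(k)}\}$, (iii) zero supply at interior non-boundary nodes, and (iv) unconstrained supply at $i^{(k)}$ (which acts as a single source/sink that absorbs the net imbalance through the uncapacitated edge to $t^{(k)}$).

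Next I would invoke the standard feasibility criterion for flows with supplies and demands on an undirected capacitated graph: such a flow exists if and only if for every subset $\mathcal{S} \subseteq \cluster{k}$ not containing $i^{(k)}$, the net supply in $\mathcal{S}$ is at most the total capacity of edges leaving $\mathcal{S}$, i.e.
\begin{equation}
\Big| \sum_{i \in \mathcal{S} \cap \partial \cluster{k}} w_{(t^{(k)},i)} \Big| \;\leq\; \big|\{e \in \edges : e \text{ joins } \mathcal{S} \text{ and } \cluster{k}\setminus\mathcal{S}\}\big|.
\end{equation}
(Subsets containing $i^{(k)}$ impose no constraint since $i^{(k)}$ can absorb any imbalance.) This is just the max-flow min-cut theorem applied with $i^{(k)}$ as the super-sink; note the capacities $|f_e|\leq 1$ contribute $1$ per edge to each cut direction in the undirected setting.

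Finally I would verify this criterion using the hypothesis of the proposition. Since $|w_{(t^{(k)},i)}| = 2$ for each boundary node, for any $\mathcal{S}$ with $s = |\mathcal{S} \cap \partial\cluster{k}|$ one has $\big|\sum_{i \in \mathcal{S} \cap \partial \cluster{k}} w_{(t^{(k)},i)}\big| \leq 2s$, which is bounded above by the cut size by assumption. Hence the feasibility criterion holds for every choice of $\{w_e\}$, establishing well-connectedness.

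The main obstacle I foresee is bookkeeping rather than conceptual: carefully justifying the elimination of $t^{(k)}$ (so that the uncapacitated return edge to $i^{(k)}$ really does convert the prescribed-flow problem into an unconstrained-source transportation problem), and stating the feasibility criterion in a form that makes the upper bound $2s$ on the net supply match the cut-size hypothesis exactly, so no slack or orientation issues are introduced.
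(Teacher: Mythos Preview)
Your proposal is correct and is essentially the same approach as the paper's: the paper's proof is a one-line invocation of Hoffman's circulation theorem, and what you have written is precisely a correct unpacking of that invocation (eliminating $t^{(k)}$ via the uncapacitated return edge, reducing to a transportation feasibility problem with $i^{(k)}$ as super-sink, and checking the cut condition against the hypothesis $2s$). The only residual bookkeeping you flag---handling the uncapacitated edge and matching the cut criterion to the $2s$ bound---is exactly what Hoffman's theorem packages, so there is no real gap.
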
 
\begin{proof}
Application of Hoffman's circulation theorem \cite[Thm. 10.2.7]{JungnckelBook}. 
See also \cite{Hoffman1960} and \cite[Ch. 3]{BertsekasNetworkOpt}. 
\end{proof}
Let  $E^{(\mathcal{S})}$ denote the number of edges connecting some node 
in $\mathcal{S}\subset \cluster{k}$ with some node in $\cluster{k} \setminus \mathcal{S}$. 
The condition in Proposition \ref{prop_suff_cond_cutsize} can be ensured by 
requiring 
\begin{equation} 
\label{equ_suff_cond_flow}
E^{(\mathcal{S})} \geq 2 |\partial \cluster{k}| \mbox{ for each subset } \mathcal{S} \subset \cluster{k} \setminus \{ i^{(k)} \}.
\end{equation} 

In order to identify SBM parameter regimes where \eqref{equ_suff_cond_flow} is 
satisfied with high probability we use spectral graph theory \cite{ChungSpecGraphTheory}. 
In particular, we relate the number $E^{(\mathcal{S})}$ of edges leaving a subset 
$\mathcal{S} \subset \cluster{k}$ to the smallest non-zero eigenvalue of the 
Laplacian matrix $\LapCluster{k}$ \cite[Ch. 2]{ChungSpecGraphTheory}
\begin{equation} 
\label{equ_cut_specgaph}
E^{(\mathcal{S})} \geq  (1-1/\numnodes) \lambda_{2}\big( \LapCluster{k} \big). 
\end{equation} 
Combining \eqref{equ_cut_specgaph} with \eqref{equ_suff_cond_flow}, a sufficient condition 
for the requirements in Proposition \ref{prop_suff_cond_cutsize} is 
\begin{equation} 
\label{equ_suf_condcut_specgaph}
(1-1/\numnodes) \lambda_{2}\big(\LapCluster{k} \big) \geq 2 |\partial \cluster{k}|
\end{equation} 
for each cluster $\cluster{k}$ of the empirical graph $\graph$. 

We now turn to characterizing regimes of the SBM parameters 
$\clustersize{1},\ldots,\clustersize{\nrclusters},\probintra,\probinter$ such 
that the condition \eqref{equ_suf_condcut_specgaph} is satisfied. To this end, 
we will use concentration results to control the probability of the 
left or right hand side in \eqref{equ_suf_condcut_specgaph} to deviate too 
much from their corresponding expected values. 

Let us begin with analyzing the boundary size $|\partial \cluster{k}|$. We can represent 
this quantity as the sum of $\clustersize{k}(\numnodes-\clustersize{k})$ i.i.d.\ Bernoulli 
variables $t_{i,j}$ (see \eqref{equ_def_prob_inter}),
\begin{equation} 
|\partial \cluster{k}| = \sum_{i \in \cluster{k}, j \notin \cluster{k}} t_{i,j}.
\end{equation}  
Using well-known concentration inequalities (see, e.g., \cite[Cor. 7.31]{RauhutFoucartCS}, 
\begin{align} 
\label{equ_large_dev_cluster_boundary}
\prob \big\{|\partial \cluster{k}| \geq 2 \probinter \clustersize{k}(\numnodes-\clustersize{k}) \big\} & \leq \nonumber \\
& \hspace*{-15mm} \exp\big(- \probinter \clustersize{k}(\numnodes-\clustersize{k})  \alpha\big).
\end{align} 

We now turn to analyzing the large deviations of the LHS in \eqref{equ_suf_condcut_specgaph}. 
The second smallest eigenvalue $\specgap{k} \defeq \lambda_{2}\big(\LapCluster{k}\big)$ 
of the Laplacian of the Erd{\"o}s-R{\'e}nyi graph $\cluster{k}$ satisfies \cite{Tropp2015} 
\begin{equation}
\label{equ_large_dev_spec_gap}
\prob \big\{ \specgap{k} \leq (1/2) \probintra  \clustersize{k} \big\} \leq (\clustersize{k}- 1) 0.9^{\probintra \clustersize{k}/2}.
\end{equation} 

The statement of Theorem \ref{thm_main_result} follows then 
from  combining \eqref{equ_large_dev_cluster_boundary} 
and \eqref{equ_large_dev_spec_gap}, with a union bound.  

\section{Numerical Experiments}
\label{sec_SBM_graph}

We verify Theorem \ref{thm_main_result} numerically using a PLSBM with 
clusters $\cluster{1}\!=\!\{1,\ldots,50 \}$ and $\cluster{2}\!=\!\{51,\ldots,100\}$. 

An edge is placed between nodes $i,j$ with probability $\probintra$ if they are in 
the same cluster and with probability $\probinter$ if they are from different clusters. 
using Algorithm \ref{sparse_label_propagation_mp} and, we recover the cluster assignments $c_{i}$ for all nodes 
based on knowing them for $\nrlabeledcluster$ nodes in each cluster $\cluster{k}$. 

%
%

Fig.\ \ref{fig_NMSEconnect_SBM} depicts the accuracy of Alg.\ \ref{sparse_label_propagation_mp} 
for varying $\probintra/\probinter$ and number $\nrlabeledcluster$ of labeled data points (per cluster). 
The accuracy is the fraction of nodes $i\!\notin\!\samplingset$ for which $\hat{c}_{i} = c_{i}$ 
(averaged over $10$ i.i.d.\ simulation runs). Agreeing with \eqref{equ_cond_param_recovery}, the 
accuracy of Alg.\ \eqref{sparse_label_propagation_mp} depends on the SBM parameters 
${\bf \theta}$ via the quantity $\nrlabeledcluster\probintra/\probinter$, for fixed cluster size $\clustersize{k}$. 

\begin{figure}[htbp]
\begin{center}
\begin{tikzpicture}
 \tikzset{x=0.03cm,y=5cm,every path/.style={>=latex},node style/.style={circle,draw}}
    \csvreader[ head to column names,%
                late after head=\xdef\aold{\a}\xdef\bold{\b},,%
                after line=\xdef\aold{\a}\xdef\bold{\b}]%
                {ACCoverSBMParam05.csv}{}
                {\draw [line width=0.0mm] (\aold, \bold) (\a,\b) node {\large $\circ$};
               }
           
              \csvreader[ head to column names,%
          late after head=\xdef\aold{\a}\xdef\bold{\b},,%
          after line=\xdef\aold{\a}\xdef\bold{\b}]%
          {ACCoverSBMParam10.csv}{}
          {\draw [line width=0.0mm] (\aold, \bold) (\a,\b) node {\large $\times$};
          }
      
                    \csvreader[ head to column names,%
      late after head=\xdef\aold{\a}\xdef\bold{\b},,%
      after line=\xdef\aold{\a}\xdef\bold{\b}]%
      {ACCoverSBMParam15.csv}{}
      {\draw [line width=0.0mm] (\aold, \bold) (\a,\b) node {\large $\star$};
      }
          \draw[->] (0,0.5) -- (230,0.5); 
      \draw[->] (0,0.49) -- (0,1.1);
      
        \node [anchor=west] at (100,0.8) {$\nrlabeledcluster\!=\!5$ ('$\circ$')};
         \node [anchor=west] at (100,0.7) {$\nrlabeledcluster\!=\!10$ ('$\times$')};
         \node [anchor=west] at (100,0.6) {$\nrlabeledcluster\!=\!15$ ('$\star$')};
        
      \node [anchor=west] at (0.2,1.1) {accuracy};
            \foreach \label/\labelval in {0.5/$0.5$,1/$1$}
        { 
          \draw (1pt,\label) -- (-1pt,\label) node[left] {\large \labelval};
        }
        
              \foreach \label/\labelval in {0/$0$,40/$40$,80/$80$,120/$120$,160/$160$,200/$200$}
        { 
          \draw (\label,0.51) -- (\label,0.49) node[below] {\large \labelval};
        }
         \node [anchor=west] at (180.1,0.6) {\centering $\nrlabeledcluster \probintra/\probinter$};
        \vspace*{-3mm}
\end{tikzpicture}
        \vspace*{-3mm}
\end{center}
  \caption{Clustering accuracy achieved by Algorithm\ \ref{sparse_label_propagation_mp} for varying $\probintra/\probinter$. 
   }
  \label{fig_NMSEconnect_SBM}
  \vspace*{-3mm}
\end{figure}
The source code underlying this experiment is available at \url{https://github.com/alexjungaalto/ResearchPublic/blob/master/TVMinPLSBM/tvmin_PLSBM.m}. 

\section{Acknowledgement}
The author wishes to thank Lasse Leskel{\"a} for helpful discussions.


\bibliographystyle{IEEEbib}
\bibliography{/Users/alexanderjung/Literature}

\end{document}